\newtheorem{theorem}{Theorem}
\newtheorem{lemma}{Lemma}
\newtheorem{definition}{Definition}
\title{Multimodal Reward Shaping for Efficient Exploration in Reinforcement Learning}
\author{%
	Mingqi Yuan\raisebox{0pt}[0pt][0pt]{\textsuperscript{1,2}} \qquad Man-on Pun\raisebox{0pt}[0pt][0pt]{\textsuperscript{1,2}} \qquad Dong Wang\raisebox{0pt}[0pt][0pt]{\textsuperscript{1}} \qquad Yi Chen\raisebox{0pt}[0pt][0pt]{\textsuperscript{1,2}}\qquad Haojun Li\raisebox{0pt}[0pt][0pt]{\textsuperscript{1,2}} 
	\\\\
	\raisebox{0pt}[0pt][0pt]{\textsuperscript{1}}The Chinese University of Hong Kong, Shenzhen \\
	\raisebox{0pt}[0pt][0pt]{\textsuperscript{2}}Shenzhen Research Institute of Big Data \\
	\texttt{\{mingqiyuan, haojunli\}@link.cuhk.edu.cn} \\
	\texttt{\{simonpun,yichen,wangdong\}@cuhk.edu.cn}
}
\begin{document}
	
	\maketitle

\begin{abstract}
	Maintaining the long-term exploration capability of the agent remains one of the critical challenges in deep reinforcement learning. A representative solution is to leverage reward shaping to provide intrinsic rewards for the agent to encourage exploration. However, most existing methods suffer from vanishing intrinsic rewards, which cannot provide sustainable exploration incentives. Moreover, they rely heavily on complex models and additional memory to record learning procedures, resulting in high computational complexity and low robustness. To tackle this problem, entropy-based methods are proposed to evaluate the global exploration performance, encouraging the agent to visit the state space more equitably. However, the sample complexity of estimating the state visitation entropy is prohibitive when handling environments with high-dimensional observations. In this paper, we introduce a novel metric entitled Jain's fairness index to replace the entropy regularizer, which solves the exploration problem from a brand new perspective. In sharp contrast to the entropy regularizer, JFI is more computable and robust and can be easily applied generalized into arbitrary tasks. Furthermore, we leverage a variational auto-encoder model to capture the life-long novelty of states, which is combined with the global JFI score to form multimodal intrinsic rewards. Finally, extensive simulation results demonstrate that our multimodal reward shaping (MMRS) method can achieve higher performance than other benchmark schemes. Our code is available at our GitHub website \footnote{https://github.com/yuanmingqi/MMRS}.
\end{abstract}

\section{Introduction}
Reinforcement learning (RL) aims to learn the optimal policy that maximizes the long-term expected return \cite{sutton2018reinforcement}. To that end, the agent is required to keep exploring the environment and visit all possible state-action pairs infinitely \cite{watkins1992q}. However, many existing RL algorithms suffer from inadequate exploration mechanisms, in which the agent can only linger in a restricted area throughout the whole learning procedure. As a result, the policy always prematurely falls into local optima after finite steps and never improves again \cite{stadie2015incentivizing}. To maintain exploration across episodes, a simple approach is to employ stochastic policies such as $\epsilon$-greedy policy and Boltzmann exploration \cite{lecun2015deep}, which randomly select all the possible actions with a non-zero probability in each state. For continuous control tasks, an additional noise item is added to the action to realize limited exploration. Such techniques are prone to learn the optimal policy eventually in the tabular setting, but the learning overhead is prohibitive when handling complex environments with high-dimensional observations. 

To cope with exploration problems in complex tasks, the reward shaping approach is leveraged to form an additional reward mechanism to improve exploration. More specifically, recent approaches proposed to provide intrinsic rewards for the agent to assess its exploration performance \cite{dayan2002reward}. In sharp contrast to the extrinsic rewards given by the environment explicitly, intrinsic rewards represent the inherent learning motivation or curiosity of the agent, which are difficult to characterize and evaluate \cite{singh2005intrinsically}. Many pioneering works have been devoted to realizing computable intrinsic reward modules, and they can be broadly categorized into novelty-based and prediction error-based approaches \cite{xu2017study, csimcsek2006intrinsic, burda2018exploration, lee2019efficient}. For instance, \cite{strehl2008analysis, ostrovski2017count, bellemare2016unifying} employed a state visitation counter to evaluate the novelty of states. Since it is challenging to perform counting in environments with high-dimensional observations, the pseudo-count method is proposed to approximate the actual count results. Moreover, a higher bonus will be assigned to those infrequently-seen states, incentivizing the agent to revisit surprising states and increasing the probability of learning better policy. Methods in \cite{pathak2017curiosity, yu2020intrinsic, yuan2021hybrid, stadie2015incentivizing} followed the second idea, in which the prediction error of a dynamic model is utilized as intrinsic rewards. Given an observed transition, an attendant model was designed to predict a successor state based on the current state-action pair. Then the intrinsic reward is computed as the Euclidean distance between the predicted successor state and the true successor state. In particular, \cite{burda2018large} attempted to perform RL only using the second intrinsic rewards, in which the agent could also achieve considerable performance in many experiments.

All the methods above produce vanishing intrinsic rewards, i.e., the intrinsic rewards will decrease with visits \cite{ecoffet2019go}. Once the clip of intrinsic rewards is empty, the agent will have no additional motivation to explore the environment further. To obtain the long-term exploration ability, \cite{badia2020never} proposed a never-give-up (NGU) framework that learns mixed intrinsic rewards consist of episodic and life-long state novelty. NGU evaluates the episodic state novelty through a slot-based memory and pseudo-count method \cite{bellemare2016unifying}, encouraging the agent to visit more distinct states in each episode. Since the memory will be wiped at the beginning of the episode, the intrinsic rewards will not decay with the training process. Meanwhile, NGU further introduced a random network distillation (RND) module to capture the life-long novelty of states \cite{burda2018exploration}. By controlling the learning rate of RND, the life-long intrinsic reward can prevent the agent from visiting familiar states more smoothly. NGU suffers from complicated architecture and high computational complexity, making it difficult to be generalized into arbitrary tasks. A more straightforward framework entitled rewarding impact-driven exploration (RIDE) is proposed in \cite{raileanu2020ride}. RIDE inherits the inverse-forward pattern of \cite{pathak2017curiosity}, in which two discriminative models are leveraged to reconstruct the transition process. Then the Euclidean distance between two consecutive encoded states is utilized as the intrinsic reward, which encourages the agent to take actions that result in more state changes. Moreover, RIDE uses episodic state visitation counts to discount the generated rewards, preventing the agent from staying at states that lead to large embedding differences and avoiding the television dilemma reported in \cite{savinov2018episodic}.

However, both NGU and RIDE pay excessive attention to single states, which cannot reflect the global exploration extent. Moreover, the methods above have poor mathematical interpretability and rely on attendant models heavily. To circumvent this problem, \cite{islam2019entropy} proposed to maximize the entropy of state visitation distribution, forcing the agent to visit the state space more equitably. \cite{islam2019entropy} first proved that such distribution could be estimated via importance sampling before using a variational auto-encoder (VAE) model \cite{kingma2013auto} to estimate its entropy. Given a transition, the VAE accepts the parameters of policy as input and outputs a reconstructed state. Therefore, maximizing the reconstruction error of VAE is equivalent to maximizing the state visitation entropy. Finally, the policy and VAE model can be updated together through the policy gradient method. In particular, \cite{zhang2018dissection} further expanded the Shannon entropy regularizer into R\'enyi Entropy to adapt to arbitrary tasks. To realize an efficient and stable entropy estimate, \cite{seo2021state} proposed a random encoder for efficient exploration (RE3) framework that requires no representation learning. The observations are collected and encoded in each episode using a fixed deep neural network (DNN). Then a $k$-nearest neighbor estimator \cite{singh2003nearest} is leveraged to estimate the entropy of SVD. Simulations results demonstrated that RE3 improved the sample efficiency of both model-free and model-based RL algorithms. However, the estimation error and sample complexity of all the methods above grow exponentially with the state space size, and an imperfect estimation will inevitably mislead the policy learning. 

Inspired by the discussions above, it is non-trivial to find a more straightforward and computable method to estimate or replace the entropy regularizer. Moreover, we consider combining the state novelty and the entropy regularizer to form multimodal intrinsic rewards, evaluating the exploration performance more precisely. In this paper, we propose multimodal reward shaping (MMRS), a model-free, fairness-driven, and generative-model empowered method for providing high-quality intrinsic rewards. Our main contribution are summarized as follows:
\begin{itemize}
	\item We first dived into the sample complexity of the entropy-based intrinsic rewards, and formally proved a bound of the estimation error. After that, we reanalyzed and proposed a new perspective to address the exploration problem. In particular, a novel metric entitled Jain's fairness index was introduced to replace the entropy-regularizer, and we proved the utility equivalence between the two metrics. Furthermore, we discussed the practical employment of JFI both in tabular settings and environments with high-dimensional observations.
	
	\item Since Jain's fairness index evaluates the global exploration performance, the life-long state novelty was leveraged to form multimodal intrinsic rewards. In particular, we used a VAE model to perform state embedding and capture the life-long novelty of states. Such a method requires no additional memory and avoids overfitting, which is more efficient and robust than RND.
	
	\item Finally, extensive simulations are performed to compare MMRS with existing similar methods. We first validated that MMRS can overcome the problem of vanishing intrinsic rewards and maintain long-term exploration ability. Then the MMRS was tested both in discrete and continuous control tasks, in which the selected games have complicated state space and action space. Numerical results demonstrated that MMRS outperforms the benchmark methods with simpler architecture and higher robustness. 
\end{itemize}

\section{Problem Formulation}\label{section:pf}
In this paper, we study the RL problem that considers the Markov decision process (MDP) defined as a tuple $\mathcal{M}=\langle \mathcal{S},\mathcal{A},\mathcal{T},r,\rho(\mathbf{s}_0),\gamma\rangle$ \cite{sutton2018reinforcement}, in which $\mathcal{S}$ is the state space, $\mathcal{A}$ is the action space, $\mathcal{T}(\mathbf{s}'|\mathbf{s},\mathbf{a})$ is the transition probability, $r(\mathbf{s},\mathbf{a},\mathbf{s}'):\mathcal{S}\times\mathcal{A}\times\mathcal{S}\rightarrow\mathbb{R}$ is the reward function, $\rho(\mathbf{s}_0)$ is the initial state distribution, and $\gamma\in(0,1]$ is a discount factor. Note that the reward function here conditions on a full transition $(\mathbf{s},\mathbf{a},\mathbf{s}')$. Furthermore, we denote $\pi(\mathbf{a}|\mathbf{s})$ as the policy of agent, which observes the state of environment before choosing an action from the action space. Equipped with these basic settings, we formally define the objective of RL.

\begin{definition}[RL]
	Given MDP $\mathcal{M}$, the objective of RL is to find the optimal policy $\pi^{*}$ that maximizes the expected discounted return:
	\begin{equation}\label{eq:rl objective}
		\pi^{*}=\underset{\pi\in\Pi}{\rm argmax}\;\mathbb{E}_{\tau\sim\pi}\sum_{t=0}^{T-1}\gamma^{t}r_{t}(\mathbf{s}_t,\mathbf{a}_t, \mathbf{s}_{t+1}),
	\end{equation}
	where $\Pi$ is the set of all stationary policies, and $\tau=(\mathbf{s}_{0},\mathbf{a}_{0},\dots,\mathbf{a}_{T-1},\mathbf{s}_{T})$ is the trajectory generated by the policy. 
\end{definition}

The following sections first analyze the complexity when using the entropy-based method to promote exploration. Then a new perspective is proposed to evaluate the exploration performance, and an alternative method is designed for replacing the entropy regularizer.

\section{State Visitation Entropy}
In this paper, we aim to improve the exploration of state space, in which the agent is expected to visit as many distinct states as possible within limited learning procedure. To evaluate the exploration extent of state space, we define the following state visitation distribution \cite{kakade2003sample}:
\begin{equation}\label{eq:svd}
	d^{\pi}(\mathbf{s})=(1-\gamma)\sum_{t=0}^{\infty}\gamma^{t}P(S_{t}=\mathbf{s}),\forall \mathbf{s}\in\mathcal{S},
\end{equation}
where $P(\cdot)$ denotes the probability, $S_t$ is the random variable of state at step $t$. Therefore, improving exploration requires the agent to provide equitable visitation probability for all states. Mathematically, it is equivalent to maximizing the following state visitation entropy \cite{1957Information}:
\begin{equation}\label{eq:entropy}
	H(d^{\pi})=-\int_{\mathbf{s}\in\mathcal{S}}d^{\pi}(\mathbf{s})\log d^{\pi}(\mathbf{s}) {\rm d}\mathbf{s}.
\end{equation}

To compute the entropy, we first need to estimate the state visitation distribution defined in Eq.~\eqref{eq:svd}. However, it is challenging to sample from such a distribution because it is only a theoretical construction. To address the problem, we modify the state distribution $P(\mathbf{s})$ through importance sampling to approximate the state visitation distribution. Given a trajectory $\tau=(\mathbf{s}_{0},\mathbf{a}_{0},\dots,\mathbf{a}_{T-1},\mathbf{s}_{T})$ generated by policy $\pi$, a reasonable estimate \cite{durrett2019probability} of $P(\mathbf{s})$ can be computed as:
\begin{equation}\label{eq:estimate svd}
	\hat{P}(\mathbf{s})=\frac{1}{T}\sum_{t=0}^{T}\mathbbm{1}(S_t=\mathbf{s}),
\end{equation}
where $\mathbbm{1}(\cdot)$ is the indicator function, and the weight of each sample is $\frac{1}{T}$. The following lemma indicates the sample complexity of such estimation:
\begin{lemma}\label{lemma:estimate svd}
	Given a trajectory $\tau=(\mathbf{s}_{0},\mathbf{a}_{0},\dots,\mathbf{a}_{T-1},\mathbf{s}_{T})$, assume estimating the state distribution following Eq.~\eqref{eq:estimate svd}, then with probability at least $1-\delta$, $\exists\,c\in \mathbb{R}^{+}$ such that:
	\begin{equation}
		\Vert P(\mathbf{s})-\hat{P}(\mathbf{s}) \Vert_{1} \leq c\sqrt{\frac{|\mathcal{S}|\log(1/\delta)}{T}},
	\end{equation}
	where $|\mathcal{S}|$ is the cardinality of state space.
\end{lemma}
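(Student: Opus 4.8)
The plan is to recognize the claim as a standard concentration bound for the $L_{1}$ (equivalently, total-variation) deviation of an empirical distribution over a finite alphabet from its true law, and to prove it by the classical two-step recipe: first bound the expected deviation $\mathbb{E}\Vert P(\mathbf{s})-\hat{P}(\mathbf{s})\Vert_{1}$, then show that the deviation concentrates sharply about this expectation. Throughout I would treat the visited states $S_{0},\dots,S_{T}$ so that the per-state counts $N(\mathbf{s})=\sum_{t}\mathbbm{1}(S_{t}=\mathbf{s})$ are (approximately) binomial with mean $T\,P(\mathbf{s})$, making $\hat{P}(\mathbf{s})$ in Eq.~\eqref{eq:estimate svd} an unbiased estimator of $P(\mathbf{s})$ in each coordinate.

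For the first step I would work coordinate by coordinate. Since Jensen gives $\mathbb{E}|\hat{P}(\mathbf{s})-P(\mathbf{s})|\le\sqrt{\mathrm{Var}(\hat{P}(\mathbf{s}))}=\sqrt{P(\mathbf{s})(1-P(\mathbf{s}))/T}\le\sqrt{P(\mathbf{s})/T}$, summing over the alphabet and applying Cauchy--Schwarz via $\sum_{\mathbf{s}}\sqrt{P(\mathbf{s})}\le\sqrt{|\mathcal{S}|}\sqrt{\sum_{\mathbf{s}}P(\mathbf{s})}=\sqrt{|\mathcal{S}|}$ yields
\begin{equation}
	\mathbb{E}\Vert P(\mathbf{s})-\hat{P}(\mathbf{s})\Vert_{1}\le\sqrt{\frac{|\mathcal{S}|}{T}}.
\end{equation}
This already exhibits the $\sqrt{|\mathcal{S}|/T}$ scaling that will drive the final bound.

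For the second step I would invoke the bounded-differences (McDiarmid) inequality. Viewing $\Vert P-\hat{P}\Vert_{1}$ as a function of the samples, replacing a single $S_{t}$ alters at most two coordinate counts by one each, so the $L_{1}$ norm changes by at most $2/T$; across the $T$ bounded-difference coordinates this yields a variance proxy of order $1/T$ and hence $\Pr(\Vert P-\hat{P}\Vert_{1}-\mathbb{E}\Vert P-\hat{P}\Vert_{1}\ge\epsilon)\le\exp(-T\epsilon^{2}/2)$. Setting the right-hand side to $\delta$ and solving for $\epsilon$ produces the $\sqrt{\log(1/\delta)/T}$ term; adding it to the expectation bound and collapsing the two additive pieces $\sqrt{|\mathcal{S}|/T}+\sqrt{2\log(1/\delta)/T}$ into a single multiplicative form (legitimate once $\log(1/\delta)\ge 1$ and $|\mathcal{S}|\ge 1$) delivers the claimed inequality with an absolute constant such as $c=1+\sqrt{2}$.

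The main obstacle is that the trajectory states are \emph{not} independent: they form a Markov chain, whereas both McDiarmid and the binomial variance computation presuppose independence. To make the argument rigorous one must either (i) posit that the sampled states behave as i.i.d. draws from the marginal $P(\mathbf{s})$ --- the implicit reading here, since $\hat{P}$ is offered merely as a ``reasonable estimate'' --- or (ii) replace the i.i.d. concentration tools with Markov-chain analogues that pay a price in the chain's mixing time or spectral gap. I would flag this dependence issue explicitly and adopt option (i) for the stated bound, noting that option (ii) would rescale $T$ by an effective-sample-size factor without altering the $\sqrt{|\mathcal{S}|\log(1/\delta)/T}$ shape.
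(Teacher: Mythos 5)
Your proof takes essentially the same route as the paper's: both rest on McDiarmid's bounded-differences inequality applied to $\Vert P-\hat{P}\Vert_{1}$, with the same decomposition into an expectation term of order $\sqrt{|\mathcal{S}|/T}$ plus a deviation term of order $\sqrt{\log(1/\delta)/T}$, which are then absorbed into the single constant $c$. Your write-up is in fact the more careful one --- the paper asserts the combined concentration inequality in a single cited step and silently treats the trajectory states as i.i.d., whereas you derive the expectation bound explicitly (Jensen plus Cauchy--Schwarz) and flag the Markov-dependence issue that the paper never addresses.
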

\begin{proof}
	See proof in Appendix \ref{proof:estimate svd}.
\end{proof}

Given the state space, Lemma \ref{lemma:estimate svd} demonstrates that the estimation error of $P(\mathbf{s})$ decays with the speed of $O(\sqrt{\frac{1}{T}})$. Then an importance sampling with a weight of $(1-\gamma)\gamma^{t}$ is performed on $P(\mathbf{s})$ to yield:
\begin{equation}
		\frac{(1-\gamma)}{T}\sum_{t=0}^{T}\gamma^{t}\mathbbm{1}(S_t=\mathbf{s})\stackrel{(a)}{=}
		(1-\gamma)\sum_{t=0}^{T}\gamma^{t}P(S_{t}=\mathbf{s}|S_{0})
		\stackrel{(b)}{\approx} d^{\pi}(\mathbf{s}),
\end{equation}
where $(a)$ follows the fact that $P(S_{t}=\mathbf{s}|S_{0})=\frac{\mathbbm{1}(S_t=\mathbf{s})}{T}$, and $(b)$ is the approximation using a finite truncation of infinite horizon trajectory. Next, we estimate the state visitation entropy using Monte-Carlo sampling method \cite{shapiro2003monte}:
\begin{equation}\label{eq:estimate entropy}
	\hat{H}(d^{\pi})=-\frac{1}{T}\sum_{t=0}^{T}\log d^{\pi}(\mathbf{s}).
\end{equation}

Similar to the estimation of the state visitation distribution, we can formally prove that:
\begin{lemma}\label{lemma:estimate entropy}
	Given a trajectory $\tau=(\mathbf{s}_{0},\mathbf{a}_{0},\dots,\mathbf{a}_{T-1},\mathbf{s}_{T})$, assume estimating the entropy following Eq.~\eqref{eq:estimate entropy}, then with probability at least $1-\delta$, it holds:
	\begin{equation}
		|\hat{H}(d^{\pi})-H(d^{\pi})|\leq \log |\mathcal{S}| \sqrt{\frac{\log (2/\delta)}{2T}}.
	\end{equation}
\end{lemma}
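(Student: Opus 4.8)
The plan is to bound the deviation $|\hat{H}(d^\pi) - H(d^\pi)|$ by exhibiting the estimator as an average of bounded i.i.d.-like terms and applying a concentration inequality. The Monte-Carlo estimate in Eq.~\eqref{eq:estimate entropy} is of the form $\hat{H}(d^\pi) = \frac{1}{T}\sum_{t=0}^{T} X_t$ where $X_t = -\log d^\pi(\mathbf{s}_t)$, and its expectation is exactly $H(d^\pi)$ by the definition in Eq.~\eqref{eq:entropy}. **First I would** establish that each summand $X_t = -\log d^\pi(\mathbf{s}_t)$ is bounded. The crucial observation is that over a discrete state space, $d^\pi(\mathbf{s}) \in [\,\cdot\,, 1]$ and is bounded below by a quantity of order $1/|\mathcal{S}|$ (in the maximum-entropy / uniform regime the probabilities are $\approx 1/|\mathcal{S}|$), so that $0 \le -\log d^\pi(\mathbf{s}) \le \log|\mathcal{S}|$. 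This gives each term a range of width $\log|\mathcal{S}|$, which is precisely the factor appearing in the claimed bound.

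\textbf{Next I would} invoke Hoeffding's inequality for the average of bounded variables. If each $X_t$ lies in an interval of length $\log|\mathcal{S}|$, then for the empirical mean of $T$ such terms, Hoeffding gives
\begin{equation}
	\Pr\!\left(\,|\hat{H}(d^\pi) - H(d^\pi)| \ge \epsilon\,\right) \le 2\exp\!\left(-\frac{2T\epsilon^2}{(\log|\mathcal{S}|)^2}\right).
\end{equation}
\textbf{Then I would} set the right-hand side equal to $\delta$ and solve for $\epsilon$, which yields $\epsilon = \log|\mathcal{S}|\sqrt{\frac{\log(2/\delta)}{2T}}$, matching the statement exactly. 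Thus with probability at least $1-\delta$ the desired inequality holds. The structure parallels Lemma~\ref{lemma:estimate svd}, so I expect the proof to proceed analogously but with a cleaner constant since the range of each term is explicit.

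\textbf{The main obstacle} I anticipate is twofold. First, justifying the lower bound $d^\pi(\mathbf{s}) \gtrsim 1/|\mathcal{S}|$ rigorously: the range $[0,\log|\mathcal{S}|]$ for $-\log d^\pi$ is only exact when the visitation distribution is (near-)uniform, so a fully careful argument would need either an explicit assumption that no state has vanishing visitation probability, or a more delicate treatment near $d^\pi(\mathbf{s}) \to 0$ where $-\log d^\pi$ blows up. In practice the paper likely works in the regime where exploration drives the distribution toward uniform, making $\log|\mathcal{S}|$ the effective range. Second, Hoeffding's inequality assumes independence, whereas the states $\mathbf{s}_t$ along a trajectory are correlated through the Markov dynamics; a rigorous treatment would require a martingale/Azuma-type argument or a mixing assumption, but I expect the paper to treat the samples as effectively independent for the purpose of this bound, consistent with the i.i.d.\ approximation already used in deriving Eq.~\eqref{eq:estimate svd}.
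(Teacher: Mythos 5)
Your proposal follows the same route as the paper's proof: write $\hat{H}(d^\pi)$ as an empirical average of the terms $-\log d^\pi(\mathbf{s}_t)$, apply Hoeffding's inequality, set the tail probability to $\delta$, and solve for $\epsilon$. The one substantive difference is how the range of the summands enters, and here your version is actually the more coherent one. The paper invokes Hoeffding with exponent $-2T\epsilon^{2}/|\mathcal{S}|^{2}$, i.e., it implicitly assigns each summand a range of $|\mathcal{S}|$; solving then yields $\epsilon \leq |\mathcal{S}|\sqrt{\log(2/\delta)/(2T)}$, yet its final line states the bound with prefactor $\log|\mathcal{S}|$ --- a substitution that does not follow from the preceding display. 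You instead take the range to be $\log|\mathcal{S}|$, which plugs into Hoeffding to give exactly the claimed bound with no leap. The price is the assumption you flag explicitly: $-\log d^\pi(\mathbf{s}) \leq \log|\mathcal{S}|$ requires $d^\pi(\mathbf{s}) \geq 1/|\mathcal{S}|$ for every state, which (since $d^\pi$ sums to one) in fact forces $d^\pi$ to be exactly uniform; a rigorous statement would need a floor $d^\pi(\mathbf{s}) \geq p_{\min}$ and a prefactor $\log(1/p_{\min})$. Your second caveat --- that states along a trajectory are Markov-dependent, so Hoeffding formally requires an Azuma/McDiarmid-type or mixing argument --- is likewise a real gap the paper passes over in silence, as is the fact that $\mathbb{E}[-\log d^\pi(\mathbf{s}_t)] = H(d^\pi)$ only when $\mathbf{s}_t$ is actually distributed according to $d^\pi$. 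In short: same skeleton, but your accounting of the range constant is the one that actually produces the lemma as stated, and the obstacles you name are precisely the unacknowledged weaknesses in the paper's own argument.
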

\begin{proof}
	See proof in Appendix \ref{proof:estimate entropy}.
\end{proof}

Equipped with Lemma \ref{lemma:estimate svd} and Lemma \ref{lemma:estimate entropy}, the sample complexity of estimating the state visitation entropy can be approximated as the multiplication:
\begin{equation}\label{eq:convergence rate}
	O(\log |\mathcal{S}|\cdot \sqrt{\frac{|\mathcal{S}|}{2T^{2}}}).
\end{equation}
Given a state space, Eq.~\eqref{eq:convergence rate} indicates that the estimation error decays with the speed of $O(\frac{1}{T})$, which is a considerable estimation in tabular setting. However, such estimation may produce prohibitive variance and mislead the policy learning when handling the environments with high-dimensional observations.

\section{Fairness-Driven Exploration}\label{section:fde}
To replace the complex entropy regularizer, we introduce Jain's fairness index (JFI) \cite{jain1999throughput} to evaluate the global exploration performance. JFI is a count-based metric that is first leveraged to assess the allocation fairness in the radio resources management. Given resources and allocation objects, JFI can accurately reflect the fairness difference between different allocation schemes. Our key idea is to regard the limited visitation steps as the resources to be allocated and set all the possible states as allocation objects. Therefore, higher visitation fairness indicates that more distinct states are visited, while over-centralized visitation of few states will be punished. Since maximizing the state visitation entropy also aims to realize equitable visitation, it is feasible to replace the entropy regularizer with JFI based on their utility equivalence. The following definition formulates the JFI for state visitation:

\begin{definition}[JFI for state visitation]\label{def:jfi}
	Given an episode trajectory $\tau=(\mathbf{s}_{0},\mathbf{a}_{0},\dots,\mathbf{a}_{T-1},\mathbf{s}_{T})$ generated by policy $\pi$, denote by $c(\mathbf{s}, \tau)$ the state visitation counter and $\mathcal{C}_{\tau}=\{c(\mathbf{s}, \tau)\}_{\mathbf{s}\in \mathcal{S}}$, the visitation fairness based on JFI can be computed as:
	\begin{equation}\label{eq:jfi}
		J(\mathcal{C}_{\tau})=\frac{\big[\sum_{\mathbf{s}\in\mathcal{S}}c(\mathbf{s}, \tau)\big]^{2}}{|\mathcal{S}|\sum_{\mathbf{s}\in\mathcal{S}}\big[c(\mathbf{s}, \tau)\big]^{2}},
	\end{equation}
	where $J(\mathcal{C}_{\tau})$ ranges form $1/|\mathcal{S}|$ (worst case) to $1$ (best case), and it is maximum when $c(\mathbf{s})$ gets the same value for all the states.
\end{definition}

Given a transition $(\mathbf{s}_{t},\mathbf{a}_{t},\mathbf{s}_{t+1})$, we define the following shaping function to evaluate the gain performance of exploration when transiting from $\mathbf{s}_{t}$ to $\mathbf{s}_{t+1}$:
\begin{equation}\label{eq:global irs}
	G(\mathbf{s}_{t},\mathbf{s}_{t+1})=\gamma J(\mathcal{C}_{\tau_{t+1}})-J(\mathcal{C}_{\tau_{t}}),
\end{equation}
where $\tau_{t}=(\mathbf{s}_{0},\mathbf{a}_{0},\dots,\mathbf{a}_{t-1},\mathbf{s}_{t})$ is the sub-episode trajectory. Equipped with Eq.~\eqref{eq:global irs}, the following theorem formally proves the utility equivalence between JFI and state visitation entropy.

\begin{theorem}[Consistency]\label{theorem:consistency}
	Given a trajectory $\tau=(\mathbf{s}_{0},\mathbf{a}_{0},\dots,\mathbf{a}_{T-1},\mathbf{s}_{T})$, using Eq.~\eqref{eq:global irs} as reward shaping function is equivalent to maximizing the state visitation entropy when $T\rightarrow\infty$.
\end{theorem}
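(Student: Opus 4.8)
The plan is to exploit the potential-based structure of the shaping function and then relate Jain's fairness index to an entropy functional. Writing $J_t := J(\mathcal{C}_{\tau_t})$, I would first accumulate the discounted shaping rewards along the trajectory. Because $G(\mathbf{s}_t,\mathbf{s}_{t+1}) = \gamma J_{t+1} - J_t$ has exactly the form of a potential-based shaping term, the discounted sum telescopes:
\begin{equation}
\sum_{t=0}^{T-1}\gamma^{t}G(\mathbf{s}_t,\mathbf{s}_{t+1}) = \sum_{t=0}^{T-1}\big(\gamma^{t+1}J_{t+1}-\gamma^{t}J_t\big) = \gamma^{T}J_{T}-J_0 .
\end{equation}
Since only $\mathbf{s}_0$ is visited in $\tau_0$, we have $J_0 = 1/|\mathcal{S}|$, a policy-independent constant. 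Hence maximizing the accumulated shaping signal reduces to maximizing the terminal index $J_T = J(\mathcal{C}_\tau)$ evaluated over the full trajectory (exactly when $\gamma=1$; for $\gamma<1$ the accumulated return is still an increasing function of the terminal potential, and the residual factor $\gamma^{T}$ must be controlled in the limit).

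Next I would recast the index in terms of the empirical visitation frequencies $p(\mathbf{s}) = c(\mathbf{s},\tau)/n$ with $n=\sum_{\mathbf{s}}c(\mathbf{s},\tau)$, which gives $J(\mathcal{C}_\tau) = \big(|\mathcal{S}|\sum_{\mathbf{s}}p(\mathbf{s})^{2}\big)^{-1}$. Recognizing the sum of squares as $\sum_{\mathbf{s}}p(\mathbf{s})^{2} = \exp(-H_2(p))$, where $H_2$ is the order-$2$ (collision) R\'enyi entropy, I obtain $J = |\mathcal{S}|^{-1}\exp(H_2(p))$, a strictly increasing function of $H_2(p)$. Therefore maximizing the JFI is precisely maximizing the R\'enyi-$2$ entropy of the empirical visitation distribution.

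To close the equivalence with the Shannon state-visitation entropy $H$ of Eq.~\eqref{eq:entropy}, I would then use that both $H_2$ and $H$ belong to the R\'enyi family and are strictly Schur-concave, attaining their common unique maximum $\log|\mathcal{S}|$ exactly at the uniform distribution (by Jensen / strict concavity). This is the utility equivalence claimed: pushing the visitation frequencies toward uniform simultaneously maximizes $J$ and $H$, so their argmax over distributions coincides. Finally, letting $T\to\infty$, the importance-sampling construction together with Lemma~\ref{lemma:estimate svd} guarantees that the empirical frequencies converge to the discounted visitation distribution $d^{\pi}$, so the policy maximizing the JFI-shaped return and the policy maximizing $H(d^{\pi})$ are the same uniform-visitation policy.

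The main obstacle I anticipate lies in formalizing the word ``equivalent.'' Because $J$ encodes an order-$2$ R\'enyi entropy whereas the target $H$ is the order-$1$ Shannon entropy, the two are genuinely different functionals and cannot be shown equal pointwise; the equivalence must instead be established at the level of their shared (and unique) maximizer. Care is also required with the residual term $\gamma^{T}J_T$ when $\gamma<1$, and with the passage from the finite-$T$ empirical counts $c(\mathbf{s},\tau)$ to the asymptotic distribution $d^{\pi}$, where the estimation-error bounds of Lemma~\ref{lemma:estimate svd} and Lemma~\ref{lemma:estimate entropy} are what make the $T\to\infty$ limit legitimate.
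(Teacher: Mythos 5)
Your proposal is correct and lands on the same conceptual point as the paper, but it takes a genuinely more developed route. The paper's entire proof is one sentence: at the optimum of Eq.~\eqref{eq:jfi} the counts satisfy $c(\mathbf{s},\tau)=T/|\mathcal{S}|$ for all $\mathbf{s}$, hence $d^{\pi}(\mathbf{s})=1/|\mathcal{S}|$, which is the maximizer of the entropy in Eq.~\eqref{eq:entropy}; that is, the paper establishes ``utility equivalence'' purely as a coincidence of maximizers, and says nothing about how the per-step rewards in Eq.~\eqref{eq:global irs} aggregate along a trajectory, nothing about R\'enyi entropy, and nothing about the discount factor. You add three ingredients the paper does not have: (i) the telescoping identity $\sum_{t=0}^{T-1}\gamma^{t}G(\mathbf{s}_t,\mathbf{s}_{t+1})=\gamma^{T}J_{T}-J_{0}$, which makes precise the otherwise implicit claim that maximizing the accumulated shaping signal is maximizing the terminal JFI; (ii) the identification $J(\mathcal{C}_\tau)=|\mathcal{S}|^{-1}\exp\bigl(H_{2}(p)\bigr)$ with the collision (R\'enyi-2) entropy of the empirical frequencies; and (iii) the Schur-concavity argument showing that $H_{2}$ and the Shannon entropy share the uniform distribution as their unique maximizer, which is the honest formalization of ``equivalent'' (argmax coincidence, not pointwise equality). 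What your route buys is rigor and transparency; what it also buys---and the paper's one-liner hides---is the observation that for $\gamma<1$ the residual $\gamma^{T}J_{T}$ vanishes as $T\to\infty$ (since $J_{T}\le 1$), so the discounted shaped return tends to the policy-independent constant $-J_{0}=-1/|\mathcal{S}|$: this is exactly the policy-invariance property of potential-based shaping, and it means the theorem's claim really only survives in the undiscounted or finite-horizon reading. You flag this but do not resolve it; neither does the paper, so this is best read as a defect of the theorem statement itself rather than a gap in your argument relative to the paper's.
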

\begin{proof}
	Recall the optimal condition of JFI, it holds $c(\tau,\mathbf{s})=T/|\mathcal{S}|,\forall \mathbf{s}\in\mathcal{S}$, such that $d^{\pi}(\mathbf{s})=1/|\mathcal{S}|$. This concludes the proof.
\end{proof}

\begin{figure}[h]
	\centering
	\includegraphics[width=0.7\linewidth]{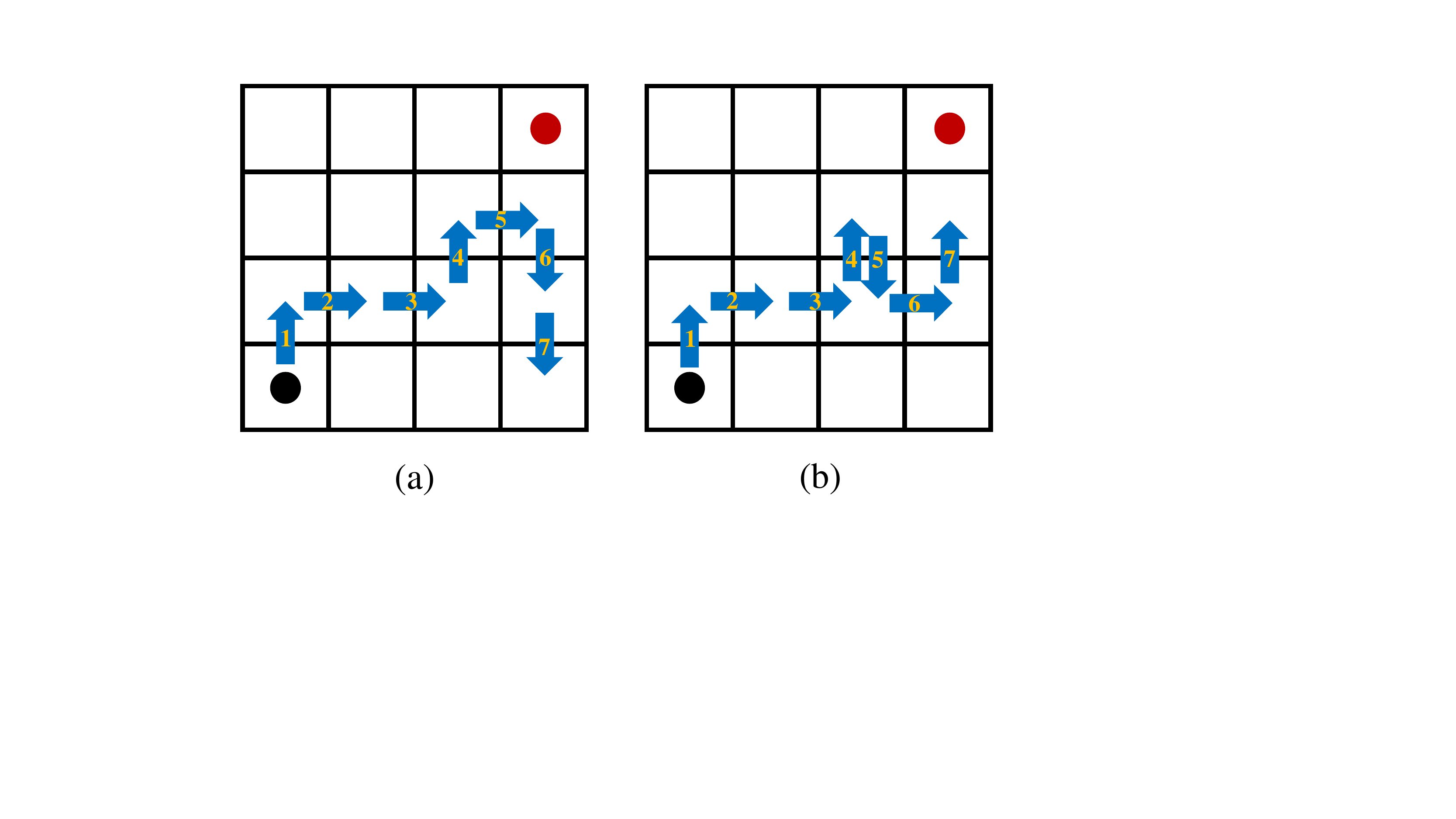}
	\caption{Two exploration trajectories in the GridWorld game, where the black node and red node denote the start and end, and integers denote the index of actions.}
	\label{fig:gridworld}
\end{figure}

Furthermore, we employ a representative example to demonstrate the usage and advantage of JFI. Fig.~\ref{fig:gridworld} demonstrates two trajectories generated by the agent when interacting with the GridWorld game. Fig.~\ref{fig:gridworld}(a) makes better exploration than Fig.~\ref{fig:gridworld}(b) because it visits more novel states within same steps. Based on Eq.~\eqref{eq:jfi}, the variation of visitation fairness can be computed as follows: 
\begin{equation}
	\begin{aligned}
		(a):\frac{1}{16}\stackrel{\nearrow}{\Longrightarrow}\frac{2}{16}\stackrel{\nearrow}{\Longrightarrow}\frac{3}{16}\stackrel{\nearrow}{\Longrightarrow}\frac{4}{16}\stackrel{\nearrow}{\Longrightarrow}\frac{5}{16}\stackrel{\nearrow}{\Longrightarrow}\frac{6}{16}\stackrel{\nearrow}{\Longrightarrow} \frac{7}{16}\stackrel{\nearrow}{\Longrightarrow}\frac{8}{16}, \\
		(b):\frac{1}{16}\stackrel{\nearrow}{\Longrightarrow}\frac{2}{16}\stackrel{\nearrow}{\Longrightarrow}\frac{3}{16}\stackrel{\nearrow}{\Longrightarrow}\frac{4}{16}
		\stackrel{\nearrow}{\Longrightarrow}\frac{5}{16}\stackrel{\searrow}{\Longrightarrow}\frac{9}{32}\stackrel{\nearrow}{\Longrightarrow}\frac{49}{144}\stackrel{\nearrow}{\Longrightarrow}\frac{2}{5},
	\end{aligned}
\end{equation}
where $\nearrow$ represents an increase and $\searrow$ represents a decrease. It is obvious that keeping exploration will increase the JFI, and visiting the known region repeatedly will be punished. Finally, Fig.~\ref{fig:gridworld}(a) obtains higher visitation fairness when compared with Fig.~\ref{fig:gridworld}(b). In sharp contrast to the entropy regularizer, the JFI is very sensitive to the change of exploratory situation, and it has higher computation efficiency.

\textbf{JFI for infinite state space.} It is easy to compute JFI by rapid and straightforward counting in tabular settings. However, such operation is challenging when handling environments with infinite state space, because episodic exploration can only visit a few parts of the state space, resulting in a fixed JFI score of the worst case. To address that problem, we first set visited states of each episode as its temporary state space before leveraging the $k$-means clustering \cite{macqueen1967some} to discretize the densely-distributed states to make them countable.

%

Given an episode trajectory $\tau=(\mathbf{s}_{0},\mathbf{a}_{0},\dots,\mathbf{a}_{T-1},\mathbf{s}_{T})$, $k$-means clustering shatters the visited states into $k$ sets $\boldsymbol{\mathcal{\tilde{S}}} =\{\mathcal{\tilde{S}}_{1},\dots,\mathcal{\tilde{S}}_{k}\}$ to minimize the sum of within-cluster distance. Formally, the algorithm aims to optimize the following objective:
\begin{equation}
	\underset{\boldsymbol{\mathcal{\tilde{S}}}}{\rm argmin}\sum_{i=1}^{k}\sum_{\mathbf{s}\in\mathcal{\tilde{S}}_{i}}\Vert \mathbf{s}-\bm{\mu}_{i} \Vert_{2}^{2},
\end{equation}
where $\bm{\mu}_{j}$ is the mean of samples in $\mathcal{\tilde{S}}_{j}$. Denote by $\mathbf{l}\in\mathcal{K}=\{1,\dots,k\}$ the label result of the visited states, assume the trajectory is labeled as $\tilde{\tau}=(\mathbf{l}_{0},\mathbf{a}_{0},\dots,\mathbf{a}_{T-1},\mathbf{l}_{T})$, then Eq.~\eqref{eq:jfi} is rewritten as:
\begin{equation}
	J(\mathcal{C}_{\tilde{\tau}})=\frac{\big[\sum_{\mathbf{l}\in\mathcal{K}}c(\mathbf{l},\tilde{\tau})\big]^{2}}{k\cdot\sum_{\mathbf{l}\in\mathcal{K}}\big[c(\mathbf{l},\tilde{\tau})\big]^{2}}.
\end{equation}

In practice, we perform clustering using the encoding fashion of the states to reduce variance and computation complexity. To realize efficient and robust encoding operation, a VAE model is bulit in the following section.

\section{Multimodal Reward Shaping}\label{section:mmrs}
\begin{figure*}[h]
	\centering
	\includegraphics[width=1.\linewidth]{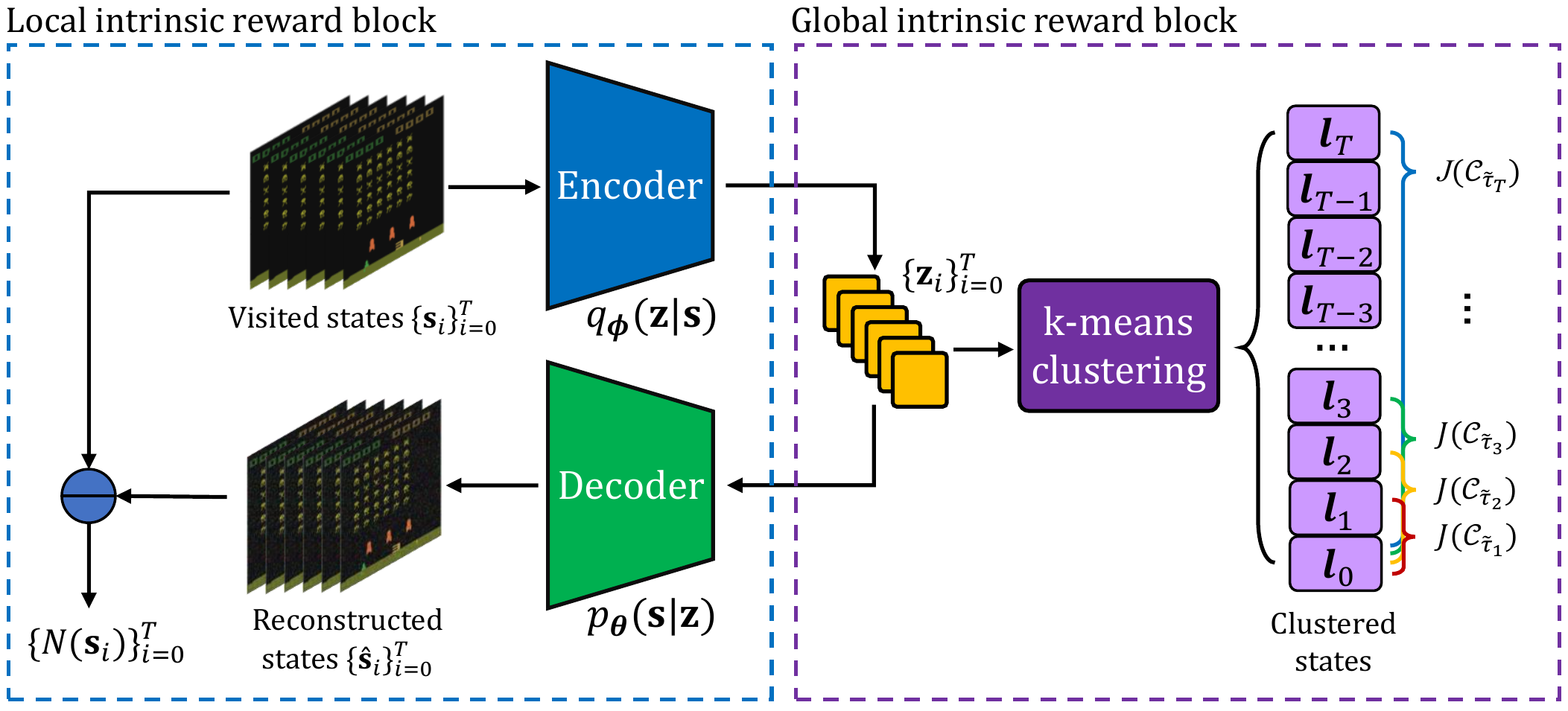}
	\caption{The overview of MMRS, where $\ominus$ denotes the Euclidean distance.}
	\label{fig:mmrs}
\end{figure*}

In this section, we propose an intrinsic reward module entitled MMRS that is model-free, fairness-driven, and generative-model empowered. As illustrated in Fig.~\ref{fig:mmrs}, MMRS is composed of two major modules, namely global intrinsic reward block and local intrinsic reward block. The former block evaluates the global exploration performance using JFI, while the latter traces the life-long novelty of states across episodes. Finally, the two kinds of exploration bonuses form multimodal intrinsic rewards for policy updates.

To capture the life-long state novelty, a general method is to employ an attendant model to record the visited states such as RND and ICM \cite{pathak2017curiosity}. However, the discriminative models suffer from overfitting and poor generalization ability. To overcome the problem, we propose to evaluate the state novelty using a VAE model, which is a powerful generative model based on Bayesian inference \cite{kingma2013auto}. A vanilla VAE has a recognition model and generative model, and they can be represented as a probabilistic \textit{encoder} and a probabilistic \textit{decoder}. We leverage the VAE to encode and reconstruct the state samples to calculate its life-long novelty. In particular, the output of the encoder can be leveraged to perform the clustering operation defined in Section \ref{section:fde}.

Denote by $q_{\bm \phi}(\mathbf{z}|\mathbf{s})$ the recognition model represented by a DNN with parameters ${\bm \phi}$, which accepts a state and encodes it into latent variables. Similarly, we represent the generative model $p_{\bm \theta}(\mathbf{s}|\mathbf{z})$ using a DNN with parameters ${\bm \theta}$, accepting the latent variables and reconstructing the state. Given a trajectory $\tau=(\mathbf{s}_{0},\mathbf{a}_{0},\dots,\mathbf{a}_{T-1},\mathbf{s}_{T})$, the VAE model is trained by minimizing the following loss function:
\begin{equation}\label{eq:vae loss}
	L(\mathbf{s}_{t};{\bm \phi},{\bm \psi})=-D_{{\rm KL}}\big(q_{\bm\phi}(\mathbf{z}|\mathbf{s}_{t})\Vert p_{\bm \theta}(\mathbf{z})\big)+\mathbb{E}_{q_{\bm \phi}(\mathbf{z}|\mathbf{s}_{t})}\big[\log p_{\bm \theta}(\mathbf{s}_{t}|\mathbf{z})\big],
\end{equation}
where $t=0,\dots,T$ and $D_{{\rm KL}}(\cdot)$ is the Kullback-Liebler (KL) divergence. For a visited state at step $t$, its life-long novelty is computed as:
\begin{equation}\label{eq:novelty}
	N(\mathbf{s}_{t})=\frac{1}{2}\Vert \Lambda(\mathbf{s}_{t})-\Lambda(\hat{\mathbf{s}}_{t}) \Vert_{2}^{2},
\end{equation}
where $\hat{\mathbf{s}}_{t}$ is the reconstructed state and $\Lambda(\cdot)$ is a normalization operator. This definition indicates that the infrequently-seen states will produce high reconstruction error, which motivates the agent to revisit and explore it further. Note that the VAE model will inevitably produce vanishing intrinsic rewards. However, we can employ a low and decaying learning rate for updates to control its decay rate.

\begin{algorithm}[h]
	\caption{Multimodal Reward Shaping}
	\label{algo:mmrs}
	\begin{algorithmic}[1]
		\STATE Initialize the policy network $\pi$, recognition model $q_{\bm \phi}$ and generative model $p_{\bm \theta}$;
		\STATE Set the coefficients $\lambda_{G}, \lambda_{L}$ and the number of clusters $k$;
		\FOR {episode $\ell=1,\dots,E$}
		\STATE Execute policy $\pi$ and collect the trajectory $\tau_{\ell}=(\mathbf{s}_{0},\mathbf{a}_{0},\dots,\mathbf{a}_{T-1},\mathbf{s}_{T})$;
		\IF {$|\mathcal{S}|\rightarrow\infty$}
		\STATE Use the encoder to encode the visited states and collect the corresponding latent variables $\{\mathbf{z}_{i}\}_{i=0}^{T}$;
		\STATE Perform $k$-means clustering to refine $\{\mathbf{z}_{i}\}_{i=0}^{T}$ and label them with $k$ integers;
		\ENDIF
		\STATE Calculate the multimodal intrinsic reward for each state-action pair in $\tau_{\ell}$:
		\begin{equation}\nonumber
		\tilde{r}(\mathbf{s},\mathbf{a},\mathbf{s}') = r(\mathbf{s},\mathbf{a},\mathbf{s}') + f(\mathbf{s},\mathbf{a},\mathbf{s}');
		\end{equation}
		
		\STATE Update $\pi$ with respect to the mixed rewards using any RL algorithms.
		\STATE Use the visited states $\{\mathbf{s}_{i}\}_{i=0}^{T}$ from $\tau_{\ell}$ to train the VAE model by minimizing the loss function defined in Eq.~\eqref{eq:vae loss};
		\ENDFOR
	\end{algorithmic}
\end{algorithm}

We refer to the JFI as global intrinsic rewards and state novelty as local intrinsic rewards, respectively. Equipped with the two kinds of exploration bonuses, we are ready to propose the following multimodal shaping function:
\begin{equation}\label{eq:mixed sf}
	f(\mathbf{s},\mathbf{a},\mathbf{s}')=\underbrace{\lambda_{G}\cdot G(\mathbf{s},\mathbf{s}')}_{\rm Global}+
	\underbrace{\lambda_{L}\cdot N(\mathbf{s})}_{\rm Local}
\end{equation}

where $\lambda_{G},\lambda_{L}$ are two weighting coefficients. Finally, the workflow of the MMRS is summarized in Algorithms \ref{algo:mmrs}.

\section{Experiments}
In this section, we evaluate our MMRS framework both on tabular setting and environments with high-dimensional observations. We compare MMRS with standard RL and three representative intrinsic reward methods for exploration, namely RE3, RIDE and RND. The brief introduction of these benchmark schemes can be found in Appendix \ref{appendix:benchmarks}. As for hyper-parameters setting, we only report the values of the best experiment results.

\subsection{Maze Games}
In this section, we first leverage a simple but representative example to highlight the effectiveness of fairness-driven exploration. We introduce a grid-based environment Maze2D implemented by \cite{matthew2016github}, depicted in Fig.~\ref{fig:maze}(a). The agent can take four actions, including left, right, up, and down and move a single position at a time. The goal of the agent is to find the shortest path from start to end. In particular, the agent can teleport from a portal to another portal of the same mark.

\begin{figure}[h]
	\centering
	\includegraphics[width=1.\linewidth]{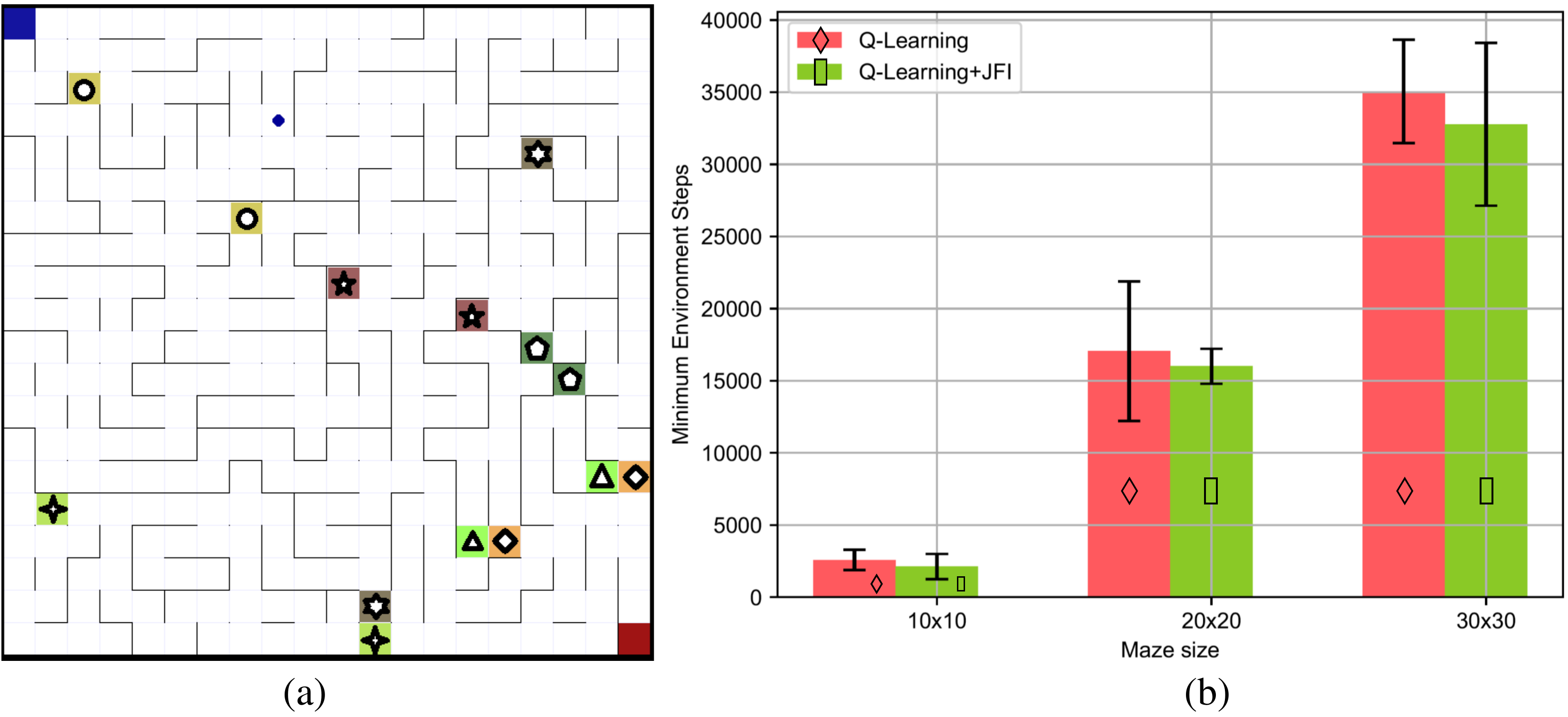}
	\caption{(a) Sample screen of a maze game with size $20\times20$. (b) Average exploration performance comparison over $100$ simulations.} 
	\label{fig:maze}
\end{figure}

\subsubsection{Experimental Setting}
We select the vanilla Q-learning algorithm \cite{watkins1992q} as the benchmark method and perform the experiments on three mazes with different sizes, in which the problem-solving complexity increases exponentially with the maze size. In each episode, the maximum environment steps was set as $10M^{2}$, where $M$ is the maze size. We initialized the Q-table with all zeros and updated the Q-table in every step for efficient training, in which the update formulation is:
\begin{equation}
Q(\mathbf{s},\mathbf{a})\leftarrow Q(\mathbf{s},\mathbf{a})+\alpha [r+\gamma \max_{\mathbf{a}}Q(\mathbf{s},\mathbf{a})-Q(\mathbf{s},\mathbf{a})],
\end{equation}
where $Q(\mathbf{s},\mathbf{a})$ is the action-value function, and $\alpha$ is a learning rate. . The learning rate was set as $0.2$, and a $\epsilon$-greedy policy with an exploration rate $0.001$ was also employed.

\subsubsection{Performance Comparison}
To compare the exploration performance, we choose the minimum environment steps for successfully visiting all states as the key performance indicator (KPI). For instance, a $10\times10$ maze has $100$ grids that corresponds to $100$ states. The minimal steps for the agent to visit all the possible states is measured as its exploration performance. As can be seen in Fig.~\ref{fig:maze}(b), the \textit{Q-learning+JFI} realizes higher performance in all three maze games, which demonstrates the powerful capability of the fairness-driven exploration.

\subsection{Atari Games}
\subsubsection{Experimental Setting}

\begin{table*}[htp]
	\centering
	\caption{Performance comparison in twenty Atari games.}
	\label{tb:dis final performance}
	\resizebox{\textwidth}{40mm}{
		\begin{tabular}{l|l|l|l|l|l}
			\hline
			Game            & PPO  & PPO+RE3  & PPO+RIDE & PPO+RND & PPO+MMRS \\ \hline
			Alien           & 1.84k$\pm$92.78 & 1.99k$\pm$31.82 & 1.88k$\pm$52.43 & 1.86k$\pm$0.11k & ${\bf 2.10k\pm71.71}$ \\
			Air Raid        & 7.57k$\pm$0.71k & 8.30k$\pm$0.61k & 8.12k$\pm$0.60k & 7.97k$\pm$0.79k & ${\bf 8.94k\pm0.73k}$ \\
			Assault         & 4.44k$\pm$0.34k & 5.34k$\pm$0.59k & 5.13k$\pm$0.30k & 4.74k$\pm$0.51k & ${\bf 5.44k\pm0.72k}$ \\		
			Asteroids 	    & 1.86k$\pm$94.36 & 2.27k$\pm$76.15 & 2.15k$\pm$0.15k & 2.27k$\pm$72.46 & ${\bf 2.45k\pm0.21k}$ \\
			Battle Zone     & 18.94k$\pm$5.29k & 29.78k$\pm$1.03k & 26.74$\pm$1.94k & 28.00k$\pm$2.12k & ${\bf 33.63k\pm1.40k}$ \\ 
			Beam Rider      & 2.03k$\pm$0.25k & ${\bf 5.57k\pm0.57k}$ & 2.28k$\pm$0.23k & 2.07k$\pm$0.50k & 2.45k$\pm$0.29k \\
			Breakout        & 0.36k$\pm$40.82 & 0.34k$\pm$38.28 & 0.36k$\pm$31.27 & 0.24k$\pm$11.60 & ${\bf 0.38k\pm12.95}$ \\
			Centipede       & 5.01k$\pm$0.31k & ${\bf 9.51k\pm1.02k}$ & 7.20k$\pm$0.45k & 8.65k$\pm$1.17k & 5.95k$\pm$0.28k \\ 
			Demon Attack    & 7.96k$\pm$0.37k & 24.53k$\pm$3.93k & 22.74k$\pm$2.07k & 4.91k$\pm$11.15k & ${\bf 26.08k\pm3.41k}$ \\
			Frostbite       & 1.27k$\pm$0.13k & 1.28k$\pm$0.11k & 1.36k$\pm$0.12k & 1.36k$\pm$0.22k & ${\bf 1.89k\pm0.26k}$ \\
			Gopher          & 4.20k$\pm$0.30k & 3.79k$\pm$0.16k & 3.69k$\pm$0.33k & ${\bf 4.76k\pm0.87k}$ & 4.00k$\pm$0.23k \\ 
			Gravitar        & 0.38k$\pm$50.67 & 0.40k$\pm$36.61 & 0.62k$\pm$67.74 & 0.51k$\pm$22.53 & ${\bf 0.67k\pm26.34}$ \\ 
			Jamesbond       & 1.79k$\pm$0.40k & 2.74k$\pm$0.54k & 2.38k$\pm$0.36k & 2.05k$\pm$0.88k & ${\bf 2.80k\pm0.40k}$ \\ 
			Krull           & 7.67k$\pm$0.13k & 9.85k$\pm$0.16k & 10.49k$\pm$0.10k & 10.19k$\pm$0.23k & ${\bf 10.58k\pm0.20k}$ \\ 
			Kung Fu Master  & 38.96$\pm$3.29k & 20.49k$\pm$0.81k & 31.54k$\pm$3.47k & 15.94k$\pm$0.98k & ${\bf 44.16k\pm2.49k}$ \\
			Ms Pacman       & 2.53k$\pm$0.12k & 1.78k$\pm$0.12k & 2.72k$\pm$0.22k & 1.56k$\pm$82.7 & ${\bf 2.86k\pm0.25k}$ \\ 
			Phoenix         & 8.22k$\pm$0.82k & 10.90k$\pm$1.37k & 11.19k$\pm$1.03k & 5.75$\pm$0.89k & ${\bf 13.44k\pm0.94k}$ \\ 
			Riverraid       & 8.58k$\pm$0.13k & 5.51k$\pm$0.44k & 8.15k$\pm$0.32k & 4.56k$\pm$0.17k & ${\bf 9.22k\pm0.28k}$ \\ 
			Seaquest        & 0.94k$\pm$4.84 & 1.83k$\pm$58.54 & 1.74k$\pm$42.37 & 0.89k$\pm$6.61 & ${\bf 1.84k\pm75.12}$ \\
			Space Invaders  & 1.03k$\pm$79.63 & ${\bf 1.53k\pm0.19k}$ & 1.28k$\pm$11.93 & 1.06k$\pm$0.80k & 1.10k$\pm$23.38 \\ \hline
	\end{tabular}}
\end{table*}

\begin{figure*}[h]
	\centering
	\includegraphics[width=1.\linewidth]{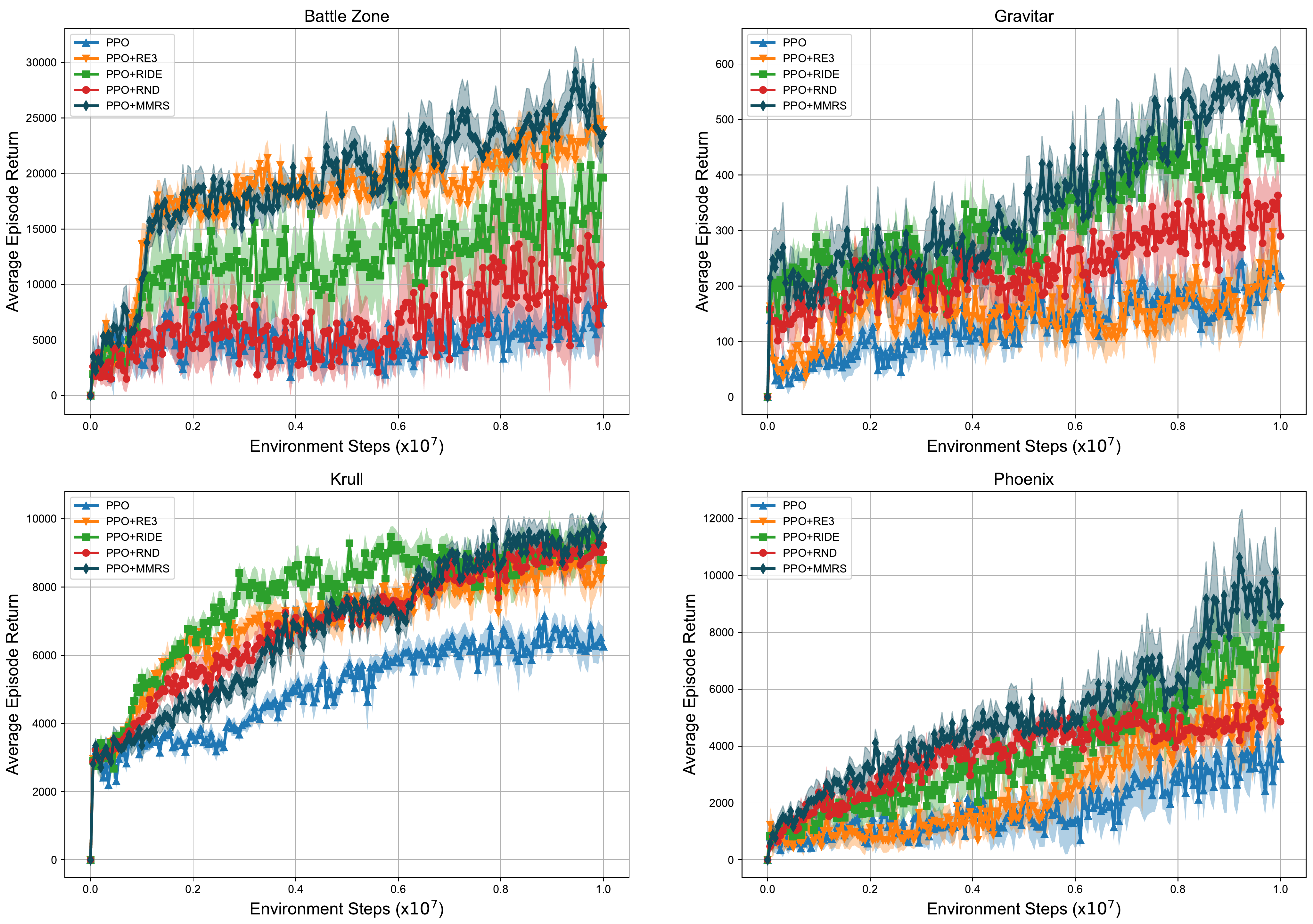}
	\caption{Moving average of episode return versus number of environment steps on Atari games.}
	\label{fig:dis eps return}
\end{figure*}

We next tested MMRS on Atari games \cite{brockman2016openai} with discrete action space, in which the player aims to obtain as higher game points as possible while keeping alive. To generate the observation of the agent, we stacked four consecutive frames as an input. These frames were also resized with shape $(84, 84)$ to reduce computational complexity. In particular, some selected games have complex action space, which can be used to evaluate the generalization ability of MMRS.

To handle the graphic observations, we leveraged convolutional neural networks (CNNs) to build MMRS and benchmark algorithms. The local intrinsic reward block of MMRS needs to learn an encoder and a decoder. The encoder was composed of four convolutional layers and one dense layer, in which each convolutional layer is followed by a batch normalization (BN) layer \cite{ioffe2015batch}. For the decoder, it utilized four deconvolutional layers to perform upsampling. Moreover, a dense layer and a convolutional layer were employed at the top and the bottom of the decoder. Note that no BN layer is included in the decoder. Finally, we used LeakyReLU activation function both for encoder and decoder, and more detailed network architectures are illustrated in Appendix \ref{appendix:atari na}.

We trained MMRS with ten million environment steps. In each episode, the agent was set to interact with eight parallel environments with different random seeds. Moreover, one episode had a length of 128 steps, producing 1024 pieces of transitions. For an observed state, it was first processed by the encoder of VAE to generate a latent vector. Then the latent vector was sent to the decoder to reconstruct the state. The pixel values of the true state and the reconstructed state were normalized into $[-1, 1]$, and the reconstruction error was utilized as the state novelty. After that, we performed $k$-means clustering on the latent vectors with $k=10$. Note that the number of clusters can be bigger if a longer episode length is employed. 

Equipped with the multimodal intrinsic rewards ($\lambda_{G}=\lambda_{L}=0.1$), we used a proximal policy optimization (PPO) \cite{schulman2017proximal} method to update the policy network. More specifically, we used a PyTorch implementation of the PPO method, which can be found in \cite{kostrikov2018github}. To make a fair comparison, we employed an identical policy network and value network for all the algorithms, and its architectures can be found in Appendix \ref{appendix:atari na}. The PPO was trained with a learning rate of $0.0025$, an entropy coefficient of $0.01$, a value function coefficient of $0.5$, and a GAE parameter of $0.95$ \cite{schulman2015high}. In particular, a gradient clipping operation with threshold $[-5, 5]$ was performed to stabilize the learning procedure. 

After the policy was updated, the transitions were utilized to update our VAE model. For the hyper-parameters setting, the batch size was set as 64, and the Adam optimizer was leveraged to perform the gradient descent. In particular, a linearly decaying learning rate was employed to prevent the life-long state novelty from diminishing rapidly. Finally, we trained the benchmark schemes following its default settings reported in the literature.

\subsubsection{Performance Comparison}

For performance comparison, the average one-life return is utilized as the KPI. Table \ref{tb:dis final performance} illustrates the performance comparison over eight random seeds on twenty Atari games, in which the highest performance is shown in bold numbers. As shown in Table \ref{tb:dis final performance}, MMRS achieved the highest performance in sixteen games. RE3 achieved the highest performance in three games, while RND beat all the other methods in one game. RE3 and RIDE outperform the vanilla PPO agent in fifteen games and nineteen games, respectively. In contrast, RND successfully outperformed the vanilla PPO agent in fourteen games. Furthermore, Fig.~\ref{fig:dis eps return} illustrates the moving average of the episode return of four selected games during training. It is clear that the growth rate of MMRS is faster than the other benchmarks, and less oscillation has occurred in the learning procedure.

%

\subsection{Bullet Games}
\begin{table}[h]
	\centering
	\caption{The details of Bullet games with continuous action space.}
	\label{tb:con games}
	\begin{tabular}{l|l|l|l}
		\hline
		Game     & Observation shape & Action extent & Action shape \\ \hline
		Ant      & (28, )            & (-1.0, 1.0)   & (8, )        \\
		Half Cheetah & (26, )        & (-1.0, 1.0)   & (6, )        \\
		Hopper   & (15, )            & (-1.0, 1.0)   & (3, )        \\
		Humanoid & (44, )            & (-1.0, 1.0)   & (17, )       \\
		Inverted Pendulum   & (5, )            & (-1.0, 1.0)   & (1, )        \\
		Walker 2D & (22, )            & (-1.0, 1.0)   & (6, )       \\
		\hline
	\end{tabular}
\end{table}
\subsubsection{Experimental Setting}
We finally tested the MMRS on Bullet games \cite{coumans2016pybullet} with continuous action space, and the details of selected games is shown in Table \ref{tb:con games}. In all six games, the target of the agent is to move forward as fast as possible without falling to the ground. Unlike the Atari games that has graphic observations, Bullet games use fixed-length vectors as observations. For instance, "Ant" game uses 28 features to describe the state of agent, and its action is a vector consists of 8 values within $[-1.0, 1.0]$.

\begin{figure}[h]
	\centering
	\includegraphics[width=1.\linewidth]{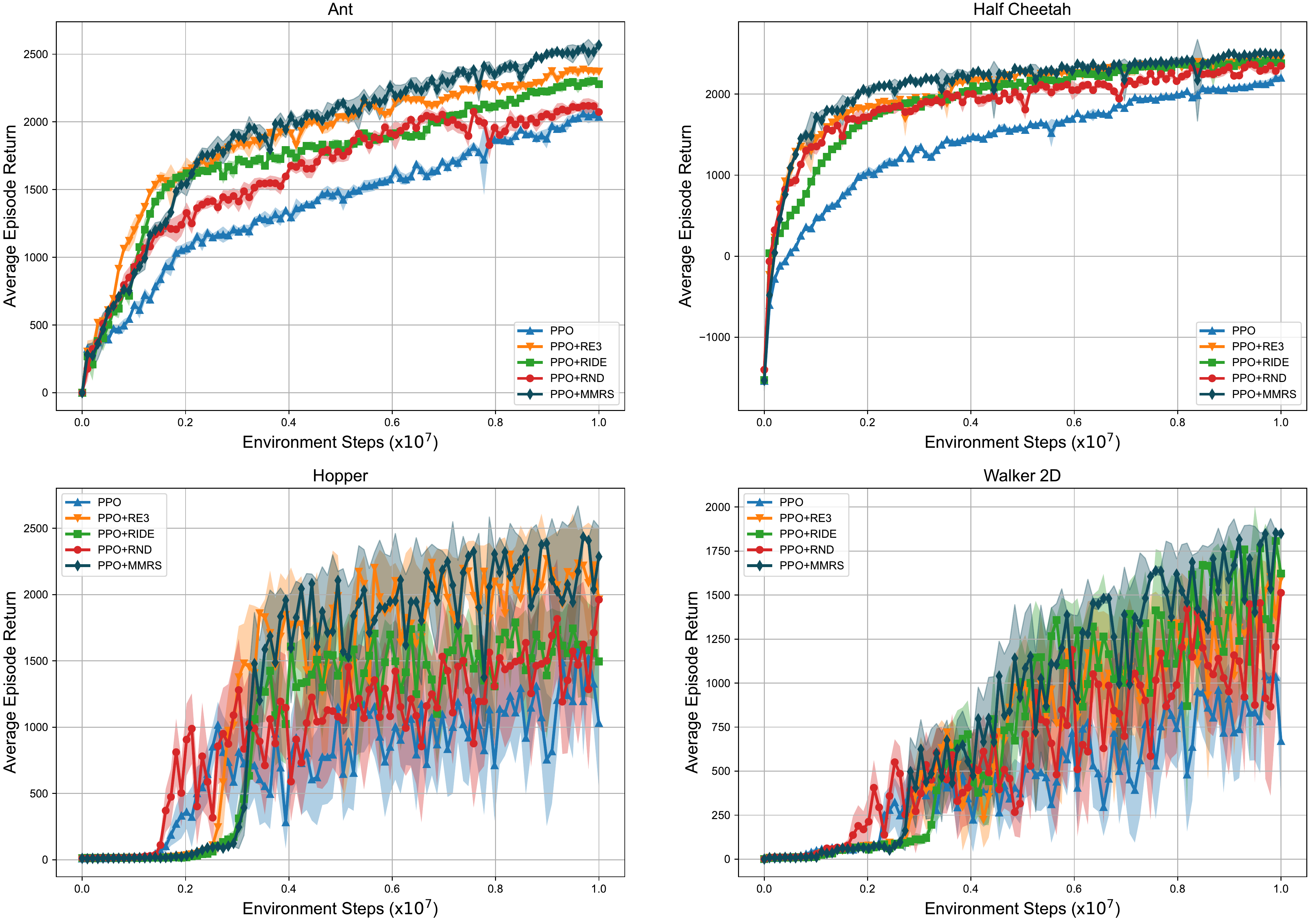}
	\caption{Moving average of episode return versus number of environment steps on Bullet games.}
	\label{fig:con eps return}
\end{figure}

\begin{table*}[h]
	\centering
	\caption{Performance comparison of six Bullet games.}
	\label{tb:con final performance}
	\resizebox{\textwidth}{15mm}{
		\begin{tabular}{l|l|l|l|l|l}
			\hline
			Game     & PPO     & PPO+RE3 & PPO+RIDE & PPO+RND & PPO+MMRS \\ \hline
			Ant               & 2.11k$\pm$42.46  & 2.43k$\pm$22.51  & 2.34k$\pm$14.84  & 2.19k$\pm$18.85  &  ${\bf 2.59k\pm21.44}$  \\
			Half Cheetah      & 2.23k$\pm$12.72 &  2.46k$\pm$6.79  & 2.44k$\pm$12.81  & 2.48k$\pm$48.97  &  ${\bf 2.54k\pm12.33}$  \\
			Hopper   		  & 1.68k$\pm$32.04  & 2.40k$\pm$16.36  & 1.84k$\pm$9.60  &  2.10k$\pm$40.96  &  ${\bf 2.46k\pm12.73}$  \\
			Humanoid          &  1.13k$\pm$80.65 & 1.28k$\pm$0.10k  & 1.15k$\pm$0.21k &  1.09k$\pm$0.13k  &  ${\bf 1.41\pm0.25k}$  \\
			Inverted Pendulum              & 1.00k$\pm$0.00  & 1.00k$\pm$0.00  & 1.00k$\pm$0.00  & 1.00k$\pm$0.00 &  ${\bf 1.00k\pm0.00}$  \\
			Walker 2D         & 1.14k$\pm$31.55  & 1.63k$\pm$12.80  & 1.84k$\pm$14.05  & 1.54k$\pm$32.05  &  ${\bf 1.87k\pm14.65}$  \\
			\hline
	\end{tabular}}
\end{table*}

We leveraged multilayer perceptron (MLP) to implement MMRS and benchmarks, and the detailed network architectures are illustrated in Appendix \ref{appendix:bullet na}. Note that no BN layers were introduced in this experiment. We trained MMRS with ten million environment steps. The agent was also set to interact with eight parallel environments with different random seeds in each episode, and a diagonal Gaussian distribution was used to sample actions. The rest of the updating procedure was consistent with the experiments of Atari games, but no normalization was performed to the states. For computing the multimodal intrinsic rewards, the coefficients were set as $\lambda_{G}=0.01, \lambda_{L}=0.001$.

\subsubsection{Performance Comparison}

Table~\ref{tb:con final performance} illustrates the performance comparison between MMRS and benchmarks, and MMRS achieved the best performance in all six games. Furthermore, Fig.~\ref{fig:con eps return} demonstrates the moving average of episode return of four selected games during the training. It is obvious that MMRS realizes stable and efficient growth when compared with benchmarks. In summary, MMRS shows great potential for obtaining considerable performance both in discrete and continuous control tasks.

\section{Conclusion}
In this paper, we have investigated the problem of improving exploration in RL. We first dived into the sample complexity of the entropy-based approaches and obtained an exact lower bound. To eliminate the prohibitive sample complexity, a novel metric entitled JFI was introduced to replace the entropy regularizer. Moreover, we further proved the utility consistency between the JFI and entropy regularizer, and demonstrated the practical usage of JFI both in tabular setting and infinite state space. Equipped with the JFI metric, the state novelty was integrated to build multimodal intrinsic rewards, which evaluates the exploration extent more precisely. In particular, we used VAE model to capture the life-long state novelty across episodes, it avoids overfitting and learns excellent state representation when compared with the discriminative models. Finally, extensive simulations were performed both in discrete and continuous tasks of Open AI Gym library. The numerical results demonstrated that our algorithm outperformed the benchmarks, showing great effectiveness for realizing efficient exploration.

\section*{Acknowledgements}
This work was supported, in part, by the Shenzhen Institute of Artificial Intelligence and Robotics for Society (AIRS) under grant No. AC01202005001, the Shenzhen Science and Technology Innovation Committee under Grant No. JCYJ20190813170803617. Corresponding author: SimonPun@cuhk.edu.cn.

\newpage
\bibliographystyle{unsrt}

\newpage
\appendix
\onecolumn
\section{Proof of Lemma \ref{lemma:estimate svd}}\label{proof:estimate svd}
Considering sampling for $T$ steps and collecting a dataset $\{\mathbf{s}_{t}\}_{t=0}^{T}$, then a reasonable estimate of the state distribution is:
\begin{equation}\nonumber
	P(\mathbf{s})=\frac{1}{T}\sum_{t=0}^{T}\mathbbm{1}(S_t=\mathbf{s}).
\end{equation}
According to the McDiarmid's inequality \cite{mcdiarmid1989method}, we have that $\forall \epsilon>0$:
\begin{equation}\nonumber
	P\bigg(\Vert P(\mathbf{s})-P(\mathbf{s}) \Vert_{1}\geq\sqrt{|\mathcal{S}|}(\sqrt{\frac{1}{T}}+\epsilon)\bigg)\leq e^{-T\epsilon^{2}}.
\end{equation}
Take logarithm on both sides, such that:
\begin{equation}\nonumber
	\log P \leq -T\epsilon^{2} \Longrightarrow \epsilon \leq \sqrt{\frac{\log (1/P)}{T}}.
\end{equation}
Assume the $P\leq\delta$, so with probability $1-\delta$, it holds:
\begin{equation}\nonumber
	\Vert d^{\pi}(\mathbf{s})-\hat{d}^{\pi}(\mathbf{s}) \Vert_{1} \leq  \sqrt{|\mathcal{S}|}(\sqrt{\frac{1}{T}}+\epsilon) \leq \sqrt{|\mathcal{S}|}(\sqrt{\frac{1}{T}}+\sqrt{\frac{\log (1/P)}{T}}).
\end{equation}

Let $P=\delta, \exists c \in \mathbb{R}^{+}$, such that:
\begin{equation}\nonumber
	\Vert P(\mathbf{s})-\hat{P}(\mathbf{s}) \Vert_{1} \leq c\sqrt{\frac{|\mathcal{S}|\log(1/\delta)}{T}}.
\end{equation}
This concludes the proof.

\section{Proof of Lemma \ref{lemma:estimate entropy}}\label{proof:estimate entropy}

Considering sampling for $T$ steps and collecting a dataset $\{\mathbf{s}_{t}\}_{t=0}^{T}$, then a reasonable estimate of state entropy is:
\begin{equation}\nonumber
	\hat{H}(d^{\pi})=-\frac{1}{T}\sum_{t=0}^{T}\log d^{\pi}(\mathbf{s}).
\end{equation}
For $\forall \epsilon>0$, the Hoeffding's inequality \cite{hoeffding1994probability} indicates that:
\begin{equation}\nonumber
	P\bigg(|\hat{H}(d^{\pi})-H(d^{\pi})|\geq \epsilon \bigg)\leq 2e^{\frac{-2T\epsilon^{2}}{|\mathcal{S}|^{2}}}.
\end{equation}
Therefore,
\begin{equation}\nonumber
	2e^{\frac{-2T\epsilon^{2}}{|\mathcal{S}|^{2}}}\geq 1-P.
\end{equation}
Let $1-P=\delta$, take logarithm on both sides, such that:
\begin{equation}\nonumber
	\log 2 - \frac{2T\epsilon^{2}}{|\mathcal{S}|^{2}} \geq 1-P \Longrightarrow \epsilon \leq |\mathcal{S}|\sqrt{\frac{\log(2/\delta)}{2T}}.
\end{equation}
Finally, it holds:
\begin{equation}\nonumber
	|\hat{H}(d^{\pi})-H(d^{\pi})|\leq \log |\mathcal{S}| \sqrt{\frac{\log (2/\delta)}{2T}}.
\end{equation}
This concludes the proof.

\section{Benchmark Schemes}\label{appendix:benchmarks}

\subsection{RE3}
Given a trajectory $\tau=(\mathbf{s}_{0},\mathbf{a}_{0},\dots,\mathbf{a}_{T-1},\mathbf{s}_{T})$, RE3 first uses a randomly-initialized DNN to encode the visited states. Denote by $\{\mathbf{x}_{i}\}_{i=0}^{T}$ the encoding vectors, RE3 estimates the entropy of $P(\mathbf{s})$ using a $k$-nearest neighbor ($k$-NN) entropy estimator \cite{singh2003nearest}:
\begin{equation}\label{eq:knn ee}
	\begin{aligned}
		\hat{H}(P(\mathbf{s})) &= \frac{1}{T}\sum_{i=0}^{T}\log \frac{T\cdot \Vert \mathbf{x}_{i} - \mathbf{x}_{i}^{k} \Vert_{2}^{C_{\rm e}} \cdot C_{\pi}}{k\cdot\Gamma(\frac{C_{\rm e}}{2}+1)}+\log k-\Psi(k) \\
		&\propto \frac{1}{T}\sum_{i=0}^{T}\vert \mathbf{x}_{i} - \mathbf{x}_{i}^{k} \Vert_{2},
	\end{aligned}
\end{equation}
where $\mathbf{x}_{i}^{k}$ is the $k$-NN of $\mathbf{x}_{i}$ within the set $\{\mathbf{x}_{i}\}_{i=0}^{T}$, $C_{e}$ is the dimension of the encoding vectors, $C_{\pi}\approx3.14159$, $\Gamma(\cdot)$ is the Gamma function, and $\Psi(\cdot)$ is the diagamma function. Equipped with Eq.~\eqref{eq:knn ee}, the intrinsic reward for each transition is computed as:
\begin{equation}
	r^{\rm intrinsic}=\log(\Vert \mathbf{x}_{i} - \mathbf{x}_{i}^{k} \Vert_{2} + 1).
\end{equation}

\subsection{RIDE}
RIDE inherits the architecture of intrinsic curiosity module (ICM) in \cite{pathak2017curiosity}, which is composed of a embedding module $g_{\rm emb}$, an inverse dynamic model $g_{\rm inv}$, and a forward dynamic model $g_{\rm for}$. Given a transition $(\mathbf{s},\mathbf{a},\mathbf{s}')$, the inverse dynamic model predicts an action using the encoding of state $\mathbf{s}$ and next-state $\mathbf{s}'$. Meanwhile, the forward dynamic model accepts $g_{\rm emb}(\mathbf{s})$ and the true action $\mathbf{a}$ to predict the representation of $\mathbf{s}'$. Given a trajectory $\tau=(\mathbf{s}_{0},\mathbf{a}_{0},\dots,\mathbf{a}_{T-1},\mathbf{s}_{T})$, the three models are trained to minimize the following loss function:
\begin{equation}
	L_{\rm RIDE}=\sum_{t=0}^{T}\Vert g_{\rm for}\big(g_{\rm emb}(\mathbf{s}_{t}),\mathbf{a}_{t}\big)-g_{\rm emb}(\mathbf{s}_{t+1}) \Vert_{2}^{2}+L_{\rm inv}\bigg(\mathbf{a}_{t}, g_{\rm inv}\big(g_{\rm emb}(\mathbf{s}_{t}),g_{\rm emb}(\mathbf{s}_{t+1})\big)\bigg),
\end{equation}
where $L_{\rm inv}$ denotes the loss function that measures the distance between true actions and predicted actions, e.g., the cross entropy for discrete action space. Finally, the intrinsic reward of each transition is computed as:
\begin{equation}
	r^{\rm intrinsic}=\frac{\Vert g_{\rm emb}(\mathbf{s}_{t+1}) - g_{\rm emb}(\mathbf{s}_{t}) \Vert_{2}}{\sqrt{N_{\rm eps}(\mathbf{s}_{t+1})}},
\end{equation}
where $N_{\rm eps}(\mathbf{s}_{t+1})$ is the number of times that state has been visited during the current episode, it can be obtained using pseudo-count method \cite{ostrovski2017count}.

\subsection{RND}
RND leverages a predictor network and a target network to record the visited states. The target network serves as the reference, which is fixed and randomly-initialized to set the prediction problem. The predictor network is trained using the collected data by the agent across the episodes. Denote by $h:\mathcal{S}\rightarrow \mathbb{R}^{m}$ and $\hat{h}:\rightarrow \mathbb{R}^{m}$ the target network and predictor network, where $m$ is the embedding dimension. The RND is trained to minimize the following loss function:
\begin{equation}
	L_{\rm RND}=\sum_{t=0}^{T}\Vert \hat{h}(\mathbf{s}_{t})-h(\mathbf{s}_{t}) \Vert_{2}^{2}.
\end{equation}
Finally, the intrinsic reward of each transition is computed as:
\begin{equation}
	r^{\rm intrinsic}=\Vert \hat{h}(\mathbf{s}_{t+1})-h(\mathbf{s}_{t+1}) \Vert_{2}^{2}.
\end{equation}

\section{Network Architecture}
\subsection{Atari Games}\label{appendix:atari na}

For instance, "8$\times$8 Conv. 32" represents a convolutional layer that has 32 filters of size 8$\times$8. A categorical distribution was used to sample an action based on the action probability of the stochastic policy. Note that "Dense 512 \& Dense 512" in Table \ref{tb:cnn na} means that there are two branches for outputing the mean and variance of the latent variables, respectively.
\begin{table}[h]
	\centering
	\caption{The CNN-based network architectures.}
	\label{tb:cnn na}
	\begin{tabular}{l|l|l|l}
		\hline
		Moudle & Policy network $\pi$                                                                                                                                                                                          & Encoder $p_{\bm \theta}$                                                                                                                                                                                                       & Decoder $q_{\bm \phi}$                                                                                                                                                                                                                                                      \\ \hline
		Input  & State                                                                                                                                                                                                    & State                                                                                                                                                                                                                 & Latent Variables                                                                                                                                                                                                                                    \\ \hline
		Arch.  & \begin{tabular}[c]{@{}l@{}}8$\times$8 Conv. 32, ReLU\\ 4$\times$4 Conv. 64, ReLU\\ 3$\times$3 Conv. 32, ReLU\\ Flatten\\ Dense 512, ReLU\\ Dense $|\mathcal{A}|$\\ Categorical Distribution\end{tabular} & \begin{tabular}[c]{@{}l@{}}3$\times$3 Conv. 32, LeakyReLU\\ 3$\times$3 Conv. 32, LeakyReLU\\ 3$\times$3 Conv. 32, LeakyReLU\\ 3$\times$3 Conv. 32\\ Flatten\\ Dense 512 \& Dense 512\\ Gaussian sampling\end{tabular} & \begin{tabular}[c]{@{}l@{}}Dense 64, LeakyReLU\\ Dense 1024, LeakyReLU\\ Reshape\\ 3$\times$3 Deconv. 64, LeakyReLU\\ 3$\times$3 Deconv. 64, LeakyReLU\\ 3$\times$3 Deconv. 64, LeakyReLU\\ 8$\times$8 Deconv. 32\\ 1$\times$1 Conv. 4\end{tabular} \\ \hline
		Output & Action                                                                                                                                                                                                   & Latent variables                                                                                                                                                                                                      & Reconstructed state                                                                                                                                                                                                                                 \\ \hline
	\end{tabular}
\end{table}
\subsection{Bullet Games}\label{appendix:bullet na}
\begin{table}[h]
	\centering
	\caption{The MLP-based network architectures.}
	\label{tb:mlp na}
	\begin{tabular}{l|l|l|l}
		\hline
		Moudle & Policy network $\pi$                                                                                                                                                                                          & Encoder $p_{\bm \theta}$                                                                                                                                                                                                       & Decoder $q_{\bm \phi}$                                                                                                                                                                                                                                                      \\ \hline
		Input  & State                                                                                                                                                                                                    & State                                                                                                                                                                                                                 & Latent Variables                                                                                 \\ \hline
		Arch.  & \begin{tabular}[c]{@{}l@{}}Dense 64, Tanh\\ Dense 64, Tanh\\ Dense $|\mathcal{A}|$\\ Categorical Distribution\end{tabular} & \begin{tabular}[c]{@{}l@{}}Dense 32, Tanh\\ Dense 64, Tanh\\ Dense 256\\ Dense 256 \& Dense 512\\ Gaussian sampling\end{tabular} & \begin{tabular}[c]{@{}l@{}}Dense 32, Tanh\\ Dense 64, Tanh\\ Dense observation shape\end{tabular} \\ \hline
		Output & Action                                                                                                                     & Latent variables                                                                                                                 & Reconstructed state                                                                               \\ \hline
	\end{tabular}
\end{table}
	
\end{document}